\documentclass[letterpaper, 10 pt, conference]{ieeeconf}

\IEEEoverridecommandlockouts                              

\overrideIEEEmargins

\usepackage[T1]{fontenc} 
\usepackage{amsmath}

\usepackage{amsfonts}
\usepackage{amssymb}
\usepackage{dsfont}
\usepackage{graphicx,color}
\usepackage[first=-100, last=100]{lcg}
\usepackage{cleveref,enumerate}
\usepackage{tikz}
\usetikzlibrary{shapes,arrows}
\usepackage{subcaption}
\usepackage{cite}
\usepackage{ntheorem}

\newcommand{\R}{\mathbb{R}}
\newcommand{\G}{\mathcal{G}}
\newcommand{\E}{\mathcal{E}}
\newcommand{\V}{\mathcal{V}}
\newcommand{\N}{\mathcal{N}}
\renewcommand{\L}{\mathcal{L}}

\newcommand{\diag}{\text{diag}}
\renewcommand{\span}{\text{span}}

\newcommand{\ones}{\mathbf{1}}

\newtheorem{lemma}{\textbf{Lemma}}
\newtheorem{theorem}{\textbf{Theorem}}
\newtheorem{problem}{\textbf{Problem}}

\newtheorem{coro}{\textbf{Corollary}}
\newcounter{example}
\newenvironment{example}[1][]{\refstepcounter{example}\par\medskip
	\noindent \textbf{Example~\theexample. #1} \rmfamily}{\medskip}
\newcounter{remark}
\newenvironment{remark}[1][]{\refstepcounter{remark}\par\medskip
	\noindent \textit{Remark~\theremark. #1} \rmfamily}{\medskip}
\newtheorem{assumption}{\textbf{Assumption}}

\newcounter{definition}
\newenvironment{definition}[1][]{\refstepcounter{definition}\par\medskip
	\noindent \textbf{Definition~\thedefinition. #1} \rmfamily}{\medskip}
\author{Mehran Zareh, Lorenzo Sabattini, and Cristian Secchi
	\thanks{ 
		Authors are with the Department of Sciences and Methods for Engineering (DISMI), University of Modena and Reggio Emilia, Italy {\tt\small{\{mehran.zareh, lorenzo.sabattini, cristian.secchi\}@unimore.it}}}
}
\date{}
\title{\LARGE \bf Enforcing Biconnectivity in Multi-robot Systems}

\begin{document}
		\maketitle
\thispagestyle{empty}
\pagestyle{empty}

\begin{abstract}
Connectivity maintenance is an essential task in multi-robot systems and it has  received a considerable attention during the last years. A connected system can be  broken into two or more subsets simply if a single robot fails. A more robust communication can be achieved if the  network connectivity is guaranteed in the case of one-robot failures. The resulting network is called biconnected. In \cite{Zareh2016biconnectivitycheck}, we presented a criterion for biconnectivity check, which basically determines a lower bound on the third-smallest eigenvalue of the Laplacian matrix.  In this paper, we introduce a decentralized gradient-based protocol to increase the value of the third-smallest eigenvalue of the Laplacian matrix, when the biconnectivity check fails.  We also introduce a decentralized algorithm to estimate the eigenvectors of the Laplacian matrix, which are used for defining the gradient. Simulations show the effectiveness of the theoretical findings.
\end{abstract}
	\section{Introduction}
	In the last decade, decentralized control systems have been increasingly investigated \cite{olfati2005consensus, Zareh_consensus,ZarehPhd15}. Advances in small size computation, communication, sensing, and actuation have caused a growing interest in decentralized control and decision making.
	Decentralized control of multi-robot systems can be exploited for addressing many real world applications (e.g., surveillance, exploration of unknown environments, space-based interferometers, and automatic highways). In these systems the robots coordinate their motion, in order to achieve the global objective.  Because of some unknown obstacles, the robots might get trapped and hence disconnected from the team. Therefore, the robots must recognize these phenomena and utilize proper strategies to preserve the network connectivity. This is a substantial task that must be seen as an objective of the control action.  In the existing literature on multi-robot control systems, the connectivity of the network graph, i.e. the interaction pattern among the robots, is assumed. There are two main approaches to preserve the connectivity: local and global maintenance.
	In local connectivity maintenance the aim is to develop a controller that keeps  all initially existing communication links. Some examples of decentralized control for local connectivity maintenance can be found in \cite{notarstefano2006maintaining, ajorlou2010class}. In comparison to the local ones, the global connectivity maintenance algorithms are based on global quantities of the network, and do not restrict link failures or creation. In the last few years, several works on this topic (see e.g. \cite{zavlanos2011graph,sabattini2013decentralized,sabattini2013distributed,giordano2013passivity}) have appeared.

	To obtain a robust communication in a multi-robot system, the connectivity has to be guaranteed when a single robot crashes or is suddenly called by a human user to perform some unpredicted task. In other words, the resulting graph must remain connected if one of the nodes and all its incident edges are removed. Possessing this property, the graph is called biconnected \cite{golumbic2004algorithmic}.  In addition to robustness, biconnectivity provides a better bandwidth for communication by providing multiple paths to the destination. The connectivity robustness of robot networks under failures is often neglected in the literature. Some related works in graph theory describe algorithms to find biconnected components in a graph based on optimization theories. These algorithms mainly utilize depth-first search or backtracking \cite{tarjan1972depth, tarjan1984finding} in a centralized way. In \cite{ahmadi2006distributed,ahmadi2006keeping}, the problem of biconnectivity check for a network is presented. They propose an approach to detect the biconnected component. Since the algorithm requires a global probe, it cannot be seen as a decentralized one. Very recently, \cite{ghedini2015improving} investigated the robustness problem in multi-robot systems so that, despite robot failures, most of the robots remain  connected and are able to continue the mission. Based on a maximum 2-hop communication, each robot is able to detect dangerous topological configurations in the sense of the connectivity and can mitigate in order to reach a new position to get a better connectivity level. The paper, based on local information, introduces a parameter, called vulnerability, that allows each robot to detect the level of its effect on the topological configuration.
	
	In order to have a biconnected network graph, one needs to recognize and relocate the robots, whose failure potentially can cause disconnection, so that more connections are created. In \cite{Zareh2016biconnectivitycheck}, based on a decentralized algorithm, we proved that the biconnectivity conditions are related to the third-smallest eigenvalue of the Laplacian matrix.

	 In this paper, we propose an algorithm to enable each node of the network graph to detect if it is a crucial one for the network connectivity.  These nodes are termed as articulation points. If there is no articulation point, then the resulting graph is biconnected. Moreover, we provide an algorithm for enforcing biconnectivity. First, each robot perturbs its communication links' weights, estimates the eigenvalues of the perturbed Laplacian matrix, and checks the biconnectivity condition introduced in \cite{Zareh2016biconnectivitycheck}. Then, if the check fails, the robots starts moving to new positions to create new links. The main idea is to form a gradient-based controller to increase the third-smallest eigenvalue of the Laplacian matrix. To this end, we need to have decentralized estimates of the third-smallest eigenvalue and an associated eigenvector. For eigenvalue estimation we use the algorithm introduced by \cite{franceschelli2013decentralized}. We develop a decentralized protocol that allows each robot to estimate the eigenvectors of the Laplacian matrix.

	The outline of the paper is as follows. In Section~\ref{section:notations}, we introduce notations and some basic notions on graph theory, which will be used in this work. The problem statement is introduced in Section~\ref{section:problem_statement}. Section~\ref{section:main_results} provides the main contribution of this paper. We provide some theorems on decentralized eigenvector and eigenvalue estimation, and a gradient-based controller to achieve biconnectivity. In Section~\ref{section:simulations}, the simulation results are given to verify the theoretical findings. Finally, in Section~\ref{section:conclusions}, we conclude the paper and describe the open problems.

	
	\section{Preliminaries}\label{section:notations}
	
	In this section, we recall some basic notions and definitions on graph theory and introduce the notation used in the paper.
	
	The topology of bidirectional communication channels among the robots is represented by an undirected graph $\G(\V, \E)$ where $\V=\{1,\ldots,n\}$ is the set of nodes (robots) and $\E\subset\V\times \V$ is the set of edges. An edge $(i, j) \in  \E$ exists if there is a
	communication channel between robots $i$ and $j$. Self loops
	$(i, i)$ are not considered. The set of robot $i$'s neighbors
	is denoted by $\N_i  = \{j \ : \ (j, i) \in  \E; j = 1, \ldots, n\}$. 
	The network graph $\G$ is encoded by the so-called {\em adjacency matrix}, an $n \times n$ matrix $A$ whose $(i,j)$-th entry $a_{ij}$ is greater than $0$ if $(i,j)\in \E$, $0$ otherwise. Obviously in an undirected graph matrix $A$ is symmetric.
	The degree matrix is defined as $D=\diag(d_1,d_2, \ldots, d_n)$ where	$d_i = \sum_{j=1}^{n}a_{ij}$ is the degree of node $i$. The Laplacian matrix of a graph is defined as $\L=D-A$. The $i$-th column of $\L$ is denoted by $l_i$. 
	The Laplacian matrix of a graph has several structural properties. Due to the Gershgorin Circle Theorem \cite{meyer2000matrix} applied to the rows or the columns of the Laplacian, it is possible to show that it has non-negative real eigenvalues for any undirected graph $\G$. By construction matrix $\L$ has at least one null eigenvalue because either the row sum or the column sum is zero. Furthermore, let $\ones$ and $\mathbf{0}$ be respectively the vectors of ones and zeros with proper dimensions, then $\L\ones=\mathbf{0}$ and $\ones^T\L=\mathbf{0}^T$. Denote by $\lambda_i(\cdot)$ the $i$-th smallest eigenvalue of a matrix, and $v_i(\cdot)$ an associated right eigenvector.  Due to the symmetry, the eigenvalues of the Laplacian matrix are all real, and can be ordered as
	$$0=\lambda_1(\L)\le\lambda_2(\L)\le \ldots\le \lambda_n(\L).$$
	In $\G$ a node $i$ is reachable from a node $j$ if there exists an undirected path from $j$ to $i$ or vice versa. If $\G$ is connected then $\L$ is a symmetric positive semidefinite irreducible matrix.  Moreover, the algebraic multiplicity of the null eigenvalue of $\L$ is one.
	For a graph $\G$, the second smallest eigenvalue of the Laplacian matrix is called \emph{algebraic connectivity}. This eigenvalue gives a measure of connectedness of the graph. 
	Algebraic connectivity is a non-decreasing function of graphs with the same set of vertices. This means that if $\G_1(\V, \E_1)$ and $\G_2(\V, \E_2)$ are two graphs constructed on the set $\V$  such that $\E_1 \subseteq \E_2$, then $\lambda_2(\G_1)\leq\lambda_2(\G_2)$. 
	In other words,  the more connected the graph becomes the larger the algebraic connectivity will be.

	We denote $\tilde{a}_i=[a_{ij}]^T\in \R^{n}, \ \ j=1,\ldots,n,j\ne i$. We also define the perturbed adjacency matrix $A^i(\epsilon)$ obtained from $A$ by multiplying all $a_{ij}$ and $a_{ji}$s by $\epsilon\in \R^+$. The associated perturbed degree $D^i(\epsilon)=\diag(A^i(\epsilon)\ones)$ and Laplacian matrix $\L^i(\epsilon)=D^i(\epsilon)-A^i(\epsilon)$ are defined accordingly. Indicate the reduced graph $\G^{R_i}$ achieved from $\G$ by removing node $i$ and all its incident edges. Accordingly, $A^{R_i}$ is the adjacency matrix, $D^{R_i}$ is the degree matrix, and $\L^{R_i}$ is the Laplacian matrix of $\G^{R_i}$.


\section{Problem Statement}\label{section:problem_statement}

We study the biconnectivity maintenance problem in multi-robot systems. Communications are assumed to be between each robot and its 1-hop neighbors, or neighbor-to-neighbor data-exchange. The connectivity of the initial network is also presumed.

The following definitions from the algebraic graph theory  will be used in the rest of this paper.
\begin{definition}
	A vertex $i \in \V $ of a connected graph $\G$ is called an \emph{articulation point} if $\G_i^R$ is not connected. 
\end{definition}
\begin{definition}
	A connected graph is called \emph{biconnected} if it has no articulation point.
\end{definition}
\begin{definition}
	A \emph{block} in $\G$ is a maximal induced connected subgraph with no articulation point. If $\G$ itself is connected and has no articulation point, then $\G$ is a block \cite{west2001introduction}.
\end{definition}
\begin{definition}
	If the sub-graph based on node $i$ and its neighbors $\N_i$ is a block, then $i$ is called a locally-biconnected node. 
\end{definition}

 We raise the two following  problems.

\begin{problem}
	For a multi-robot system with a connected interaction graph $\G$, using a distributed algorithm, find if the resulting network graph is biconnected.
\end{problem}

\begin{problem}
	If the network graph is not biconnected, then provide an algorithm to enforce this property.
\end{problem}

The former problem was investigated in the authors' previous work \cite{Zareh2016biconnectivitycheck}. In this paper, we focus on the latter. In other words, we develop a decentralized algorithm to bring a connected network into a biconnected one, i.e., the robots keep their connectivity even if one of them, for any reason, fails to communicate with the others.

\section{Main Contribution}\label{section:main_results}
	 To enable the robots to achieve a biconnected network graph, they must be aware of their connectivity status in the graph, when the corresponding nodes on the network graph and all the incident edges are disconnected. If the graph remains connected in the case of robot $i$ failure, then the node $i$ in the graph is not an articulation point. By putting weakly connected links between node $i$ and its neighbors, we aim at providing an estimate of the condition after a complete disconnection. This was proven in our previous work \cite{Zareh2016biconnectivitycheck}. We obtained that, if the third-smallest eigenvalue of the Laplacian matrix, for a nearly disconnected network, at any locally biconnected node $i$ and for some small $\epsilon\in \R$, meets the following condition
 		\begin{equation}\label{eq:lambda_3_epsilon_bound}
 		\lambda_3(\L^i(\epsilon)) >\epsilon \sqrt{n}\  (\sum\limits_{k=1}^{n}{a}^2_{ik})^{1/2},
 		\end{equation}
 	then the resulting system is biconnected.	
 	If this condition does not hold, then in order to obtain a biconnected graph, the third-smallest eigenvalue must increase. In order to increase this value, we will hereafter define a decentralized protocol based on gradient descent. Note that
 	$$\lambda_3(\L^i(\epsilon))=v_3^T(\L^i(\epsilon)) \L^i(\epsilon) v_3(\L^i(\epsilon)),$$
 	To obtain the gradient, an estimate of $v_3(\L^i(\epsilon))$ is  required.

	 In this way, our approach to solve the biconnectivity problem contains the following steps
 \begin{enumerate}[a)]
 	\item First, using the algorithm introduced in \cite{franceschelli2013decentralized}, we estimate the third-smallest eigenvalue of the Laplacian matrix.
 	\item Then, we propose a decentralized consensus estimator to obtain the   eigenvector associated with the third-smallest eigenvalue of the Laplacian matrix. 
 	
 	\item Finally, using a decentralized gradient-based protocol, the increment of the third-smallest eigenvalue is ensured.
 \end{enumerate}   

The next section provides one of the key results of this paper.

\subsection{Eigenvector estimation}
 In this section, we introduce an estimation protocol to obtain any eigenvector of the Laplacian matrix. These results can be specified to obtain the eigenvector associated with the third-smallest eigenvalue of the Laplacian matrix. 
 
 Assume that, in a multi-robot system, the network graph $\G$ is connected. Let $\tilde{\L}=\L-\tilde{\lambda}(\L) I$, with $\tilde{\lambda}(\L)\in \{\lambda_1(\L),\ldots,\lambda_n(\L)\}$. The eigenspaces of $\L$ and $\tilde{\L}$ are identical. Specifically, the kernel of $\tilde{\L}$ lies in $\span(\tilde{v}(\L))$. Denote by $\tilde{l}_i\in\R^n$ the $i$-th column of $\tilde{\L}$. Let $\displaystyle P_i=\dfrac{\tilde{l}^i\tilde{l}^{i^T}}{\tilde{l}^{i^T}\tilde{l}^i}$, $i=1,\ldots,n$, and define a block-diagonal matrix $P=\diag(P_1,\ldots,P_n)$.

Consider the following distributed estimator
\begin{equation}
    \dot{z}_i(t)=\sum\limits_{j=1}^{n}a_{ij}(z_j(t)-z_i(t))-P_iz_i(t), \ i=1,\ldots, n,					
\end{equation}
in which $z_i, \ i=1,\ldots,n$ is the $i$-th agent's estimation vector for $\tilde{v}(\L)$. We can rewrite the above equation in state-space form as
\begin{equation}\label{eq:z(t)}
	\dot{z}(t)=-Mz(t),
\end{equation}
in which $z=[z_1^T,\ldots,z_n^T]^T$, $M=k(\L\otimes I+P)$, with $k\in \R^+$ being the estimator gain.

The following assumption will be used in the rest of this paper.   
\begin{assumption}\label{assump:simple}
	All eigenvalues of the Laplacian matrix are simple.
\end{assumption}

\begin{remark}
	Note that the elements of $\L$ are functions of the relative distances. Since the robot's positions are supposed to be random, the elements can get any real value. Accordingly, $\L$ is a doubly-stochastic unstructured matrix. Therefore, the eigenvalues and the elements of any eigenvector of $\L$ are almost surely distinct.  If the distances, in some applications, get equal values, then we can define random edge weights to ensure Assumption \ref{assump:simple}. Therefore, the above assumption is not restrictive. This will be verified later by simulations.
\end{remark}

The next lemmas demonstrate some properties of $P$ and $\L\otimes I$.
\begin{lemma}
	Matrix $P$ has all eigenvalues equal to $0$ and $1$.
\end{lemma}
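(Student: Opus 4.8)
The plan is to reduce everything to the elementary fact that a real symmetric idempotent matrix has spectrum contained in $\{0,1\}$, exploiting the block-diagonal structure $P=\diag(P_1,\ldots,P_n)$.

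First I would check that each block $P_i=\tilde{l}^i\tilde{l}^{i^T}/(\tilde{l}^{i^T}\tilde{l}^i)$ is a rank-one orthogonal projector. It is manifestly symmetric, and a direct computation gives $P_i^2=\tilde{l}^i(\tilde{l}^{i^T}\tilde{l}^i)\tilde{l}^{i^T}/(\tilde{l}^{i^T}\tilde{l}^i)^2=P_i$, so $P_i$ is idempotent. Hence if $P_iv=\mu v$ for some $v\ne 0$, then $\mu v=P_iv=P_i^2v=\mu^2 v$, forcing $\mu^2=\mu$, i.e. $\mu\in\{0,1\}$. Since $P_i$ is the outer product of a single nonzero vector with itself it has rank $1$, so the eigenvalue $1$ appears with multiplicity one (eigenvector $\tilde{l}^i$) and the eigenvalue $0$ with multiplicity $n-1$ (its eigenspace being the orthogonal complement of $\span(\tilde{l}^i)$).

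Then I would lift this to $P$. As $P$ is block-diagonal, $\det(\mu I-P)=\prod_{i=1}^n\det(\mu I-P_i)=\prod_{i=1}^n\mu^{n-1}(\mu-1)$, so the eigenvalues of $P$ are exactly $0$ (with multiplicity $n(n-1)$) and $1$ (with multiplicity $n$). Equivalently, one may simply note that $P$ inherits symmetry and idempotency block by block, so the same one-line argument applies directly to $P$ itself.

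The only point deserving a remark is that the blocks are well defined, i.e. $\tilde{l}^i\ne 0$ so that the denominator $\tilde{l}^{i^T}\tilde{l}^i$ does not vanish. Since $\tilde{\L}=\L-\tilde{\lambda}(\L)I\ne 0$ for a connected graph on $n\ge 2$ nodes, not all columns of $\tilde{\L}$ can be zero; a column that did vanish would only contribute a zero block and would not affect the conclusion, and under Assumption~\ref{assump:simple} together with the genericity discussed in the preceding remark this degenerate case does not arise. I do not expect any real obstacle here: the substance of the lemma is the structural observation that each diagonal block is a one-dimensional projector, and the spectral claim then follows from textbook linear algebra.
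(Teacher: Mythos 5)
Your proof is correct and follows essentially the same route as the paper: both rest on the computation $P_i^2=P_i$, hence $P^2=P$, so any eigenvalue satisfies $\mu^2=\mu$ and lies in $\{0,1\}$. The extra details you supply (rank-one structure, multiplicities, nonvanishing of $\tilde{l}^{i^T}\tilde{l}^i$) are sound refinements of the same argument, not a different approach.
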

\begin{proof}
	From the definition of $P$, we can show that 
	$$P\cdot P=\diag(P_1\cdot P_1,\ldots,P_n\cdot P_n).$$
	We have 
	$$P_i\cdot P_i=\dfrac{\tilde{l}^i\tilde{l}^{i^T}}{\tilde{l}^{i^T}\tilde{l}^i}\cdot \dfrac{\tilde{l}^i\tilde{l}^{i^T}}{\tilde{l}^{i^T}\tilde{l}^i}=\dfrac{\tilde{l}^i(\tilde{l}^{i^T}\tilde{l}^i)\tilde{l}^{i^T}}{\tilde{l}^{i^T}\tilde{l}^i(\tilde{l}^{i^T}\tilde{l}^i)}=\dfrac{\tilde{l}^i\tilde{l}^{i^T}}{\tilde{l}^{i^T}\tilde{l}^i}=P_i.$$
	Consequently
	$$P\cdot P=\diag(P_1,\ldots,P_n)=P.$$
	Therefore $\lambda^2(P)=\lambda(P),$ which gives $\lambda(P)=0$, or $\lambda(P)=1$. 
\end{proof}
In the next lemma, using the fact that any two eigenvectors associated to two different  eigenvalues of $\L$ are perpendicular, from Assumption \ref{assump:simple}, we select a set of orthonormal eigenvectors  $v_1(\L), \ldots, v_n(\L)$.
\begin{lemma}\label{lemma:eig_L_kron}
	The eigenvalues of $\L\otimes I$ are achieved by $n$-times repeating each eigenvalue of $\L$, i.e.,
	$$\lambda_{kn+1}(\L\otimes I)=\ldots=\lambda_{(k+1)n}(\L\otimes I)=\lambda_k(\L), \ k=0,\ldots,n-1.$$
	The set $\{v_k(\L)\otimes v_l(\L),\ l=1,\ldots,n\}$ forms an orthogonal basis for the eigenspace of $\L\otimes I$ associated with $\lambda_k(\L)$, $k=1, \ldots,n$.  
\end{lemma}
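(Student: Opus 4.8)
The plan is to prove Lemma~\ref{lemma:eig_L_kron} by directly exhibiting a complete orthogonal system of eigenvectors for $\L \otimes I$ and checking that each one lies in the claimed eigenspace. First I would recall the mixed-product property of the Kronecker product: for conformable matrices, $(A \otimes B)(C \otimes D) = (AC) \otimes (BD)$. Applying this with $B = D = I$ is not quite enough, so instead I would apply it with the vectors themselves. Since Assumption~\ref{assump:simple} guarantees all eigenvalues of $\L$ are simple and $\L$ is symmetric, I can fix an orthonormal eigenbasis $v_1(\L), \ldots, v_n(\L)$ with $\L v_k(\L) = \lambda_k(\L)\, v_k(\L)$, as already set up in the lemma preceding this one.

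The key computation is then: for any indices $k, l \in \{1, \ldots, n\}$,
\begin{equation}\label{eq:kron_eig_comp}
(\L \otimes I)\bigl(v_k(\L) \otimes v_l(\L)\bigr) = \bigl(\L v_k(\L)\bigr) \otimes \bigl(I v_l(\L)\bigr) = \lambda_k(\L)\, \bigl(v_k(\L) \otimes v_l(\L)\bigr).
\end{equation}
This shows each $v_k(\L) \otimes v_l(\L)$ is an eigenvector of $\L \otimes I$ with eigenvalue $\lambda_k(\L)$, and as $l$ ranges over $1, \ldots, n$ we get $n$ such eigenvectors all sharing the eigenvalue $\lambda_k(\L)$. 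Next I would verify that these $n^2$ vectors form an orthogonal set: using the standard inner-product identity $\langle a \otimes b, c \otimes d\rangle = \langle a, c\rangle \langle b, d\rangle$, we get $\langle v_k(\L) \otimes v_l(\L),\, v_{k'}(\L) \otimes v_{l'}(\L)\rangle = \langle v_k(\L), v_{k'}(\L)\rangle \langle v_l(\L), v_{l'}(\L)\rangle = \delta_{kk'}\delta_{ll'}$ by orthonormality of the chosen eigenbasis. Hence the $n^2$ vectors $\{v_k(\L) \otimes v_l(\L)\}_{k,l=1}^n$ are orthonormal, so in particular linearly independent, so they form a complete basis of $\R^{n^2}$ consisting of eigenvectors of $\L \otimes I$. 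Counting multiplicities, each eigenvalue $\lambda_k(\L)$ of $\L$ is attained by exactly the $n$ basis vectors $\{v_k(\L) \otimes v_l(\L) : l = 1, \ldots, n\}$, which after sorting gives the stated equality $\lambda_{(k-1)n+1}(\L \otimes I) = \cdots = \lambda_{kn}(\L \otimes I) = \lambda_k(\L)$, and these $n$ vectors span (and form an orthogonal basis for) the corresponding eigenspace.

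I do not expect a serious obstacle here; this is a textbook fact about Kronecker products, and the proof is essentially the two displayed identities above plus a dimension/multiplicity count. The only mild subtlety worth stating carefully is why the $\{v_k(\L) \otimes v_l(\L) : l = 1, \ldots, n\}$ span the \emph{entire} eigenspace of $\lambda_k(\L)$ and not merely a subspace: this follows because the full collection over all $k$ already exhausts $\R^{n^2}$, so no eigenvector of $\L \otimes I$ can escape the span of the $v_k \otimes v_l$, and the eigenspaces for distinct $\lambda_k(\L)$ (distinct by Assumption~\ref{assump:simple}) are mutually orthogonal, forcing the $\lambda_k(\L)$-eigenspace to coincide with $\span\{v_k(\L) \otimes v_l(\L) : l = 1, \ldots, n\}$. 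If one wanted to avoid Assumption~\ref{assump:simple} altogether the statement would still hold with a slightly more careful accounting of multiplicities, but since the lemma is stated under that assumption I would just invoke it to keep the bookkeeping clean.
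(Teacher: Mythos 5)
Your proposal is correct and follows essentially the same route as the paper: apply the mixed-product property to show each $v_k(\L)\otimes v_l(\L)$ is an eigenvector of $\L\otimes I$ with eigenvalue $\lambda_k(\L)$, then check orthogonality via the Kronecker inner-product identity. The only difference is that you spell out the completeness/multiplicity count (that the $n^2$ orthonormal vectors exhaust $\R^{n^2}$, so the $n$ vectors for fixed $k$ span the whole eigenspace), which the paper dismisses as trivial.
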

\begin{proof}
	It is trivial to show that the eigenvalues of $\L\otimes I$ are $n$ times repeatedly achieved from those of $\L$.
	
	By multiplying $\L\otimes I$ by $v_k(\L)\otimes v_l(\L)$ we get
	$$\begin{array}{rl}
	(\L\otimes I)(v_k(\L)\otimes v_l(\L))&=\L v_k(\L) \otimes I v_l(\L)\\&=\lambda_k(\L) (v_k(\L)\otimes v_l(\L)), 
	 
	\end{array} $$
	which means that $v_k(\L)\otimes v_l(\L)$ is an eigenvector of $\L\otimes I$ associated with $\lambda_k(\L)$. Note that, for $l\ne m $ we have $v_l^T(\L)v_m(\L)=0$. Then
	$$\begin{array}{l}
	(v_k(\L)\otimes v_l(\L))^T(v_k(\L)\otimes v_m(\L)) \\\hspace{2cm} =v_k^T(\L)v_k(\L)\otimes v_l^T(\L)v_k(\L)=0.  	
	\end{array} $$
This shows the eigenvectors orthogonality and completes the proof.
\end{proof}
\begin{coro}
	The set $\{\ones\otimes v_1(\L),\ldots,\ones\otimes v_n(\L)\}$ forms an orthogonal basis for the kernel of $\L\otimes I$. 
\end{coro}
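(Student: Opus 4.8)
The plan is to obtain this corollary as the $k=1$ specialization of Lemma~\ref{lemma:eig_L_kron}. First I would recall from the preliminaries that, since $\G$ is connected, $\lambda_1(\L)=0$ is a simple eigenvalue of $\L$ and $\L\ones=\mathbf{0}$, so the normalized eigenvector may be taken as $v_1(\L)=\tfrac{1}{\sqrt{n}}\ones$. Because $\lambda_2(\L)>0$, the value $0$ appears exactly $n$ times among the eigenvalues of $\L\otimes I$ (by the multiplicity statement of Lemma~\ref{lemma:eig_L_kron}), hence $\dim\ker(\L\otimes I)=n$ and $\ker(\L\otimes I)$ coincides with the eigenspace of $\L\otimes I$ associated with $\lambda_1(\L)=0$.

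Next I would invoke the second part of Lemma~\ref{lemma:eig_L_kron} with $k=1$: the set $\{v_1(\L)\otimes v_l(\L):l=1,\ldots,n\}$ is an orthogonal basis for that eigenspace. Substituting $v_1(\L)=\tfrac{1}{\sqrt{n}}\ones$ gives $v_1(\L)\otimes v_l(\L)=\tfrac{1}{\sqrt{n}}\,(\ones\otimes v_l(\L))$, and rescaling each basis vector by $\sqrt{n}$ changes neither orthogonality nor the spanned subspace; therefore $\{\ones\otimes v_1(\L),\ldots,\ones\otimes v_n(\L)\}$ is an orthogonal basis for $\ker(\L\otimes I)$. As a cross-check I would verify the two defining facts directly via the mixed-product property of the Kronecker product: $(\ones\otimes v_l(\L))^{T}(\ones\otimes v_m(\L))=(\ones^{T}\ones)\,(v_l^{T}(\L)v_m(\L))=n\,\delta_{lm}$, using the orthonormality of the $v_l(\L)$ chosen before Lemma~\ref{lemma:eig_L_kron}, and $(\L\otimes I)(\ones\otimes v_l(\L))=(\L\ones)\otimes v_l(\L)=\mathbf{0}$ to confirm membership in the kernel.

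There is no genuine obstacle here; the statement is essentially a corollary by construction. The only points needing careful wording are that connectivity of $\G$ forces $v_1(\L)\propto\ones$ and makes $0$ a \emph{simple} eigenvalue of $\L$ — this is precisely what guarantees that the $n$ vectors $\ones\otimes v_l(\L)$ not merely lie in the kernel and are linearly independent, but actually exhaust it, so that a dimension-counting argument closes the proof rather than leaving us with only a proper subspace.
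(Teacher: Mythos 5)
Your argument is correct and is exactly the route the paper intends: the corollary is stated without proof as the $k=1$ specialization of Lemma~\ref{lemma:eig_L_kron}, using that connectivity makes $0$ a simple eigenvalue of $\L$ with eigenvector proportional to $\ones$, so the associated eigenspace of $\L\otimes I$ is the whole kernel and the basis $\{v_1(\L)\otimes v_l(\L)\}$ rescales to $\{\ones\otimes v_l(\L)\}$. Your direct cross-check via the mixed-product property is a fine addition but not a departure from the paper's approach.
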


\begin{lemma}\label{lemma:eigvec_P}
	The intersection of the kernels of $\L\otimes I$ and $P$ is $\span(\ones\otimes \tilde{v}(\L))$.
\end{lemma}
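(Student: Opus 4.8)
The plan is to describe the two kernels explicitly and then intersect them; the only real ingredient is the fundamental relation between the column space and the null space of the symmetric matrix $\tilde{\L}$.

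First I would invoke the Corollary above: $\{\ones\otimes v_1(\L),\ldots,\ones\otimes v_n(\L)\}$ is a basis of $\ker(\L\otimes I)$, and since $v_1(\L),\ldots,v_n(\L)$ span $\R^n$ this means $\ker(\L\otimes I)=\{\ones\otimes c : c\in\R^n\}$. In block notation, writing $z=[z_1^T,\ldots,z_n^T]^T$ with each $z_i\in\R^n$, the $i$-th block of $(\L\otimes I)z$ is $\sum_{j}\L_{ij}z_j$, so $(\L\otimes I)z=\mathbf{0}$ exactly when $z_1=z_2=\cdots=z_n$.

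Second I would exploit the block-diagonal structure of $P=\diag(P_1,\ldots,P_n)$: $Pz=\mathbf{0}$ iff $P_iz_i=\mathbf{0}$ for every $i$. For a connected graph on $n\ge 2$ nodes each column $\tilde{l}^i$ of $\tilde{\L}$ is nonzero (a zero $i$-th column would force $a_{ij}=0$ for all $j\neq i$, i.e.\ node $i$ isolated), so $P_i$ is the orthogonal projector onto $\span(\tilde{l}^i)$ and $P_iz_i=\mathbf{0}$ iff $z_i\perp\tilde{l}^i$. Hence $\ker P=\{z : z_i\perp\tilde{l}^i,\ i=1,\ldots,n\}$.

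Combining the two descriptions, $z\in\ker(\L\otimes I)\cap\ker P$ iff $z_1=\cdots=z_n=:c$ and $c$ is orthogonal to every column $\tilde{l}^i$ of $\tilde{\L}$, i.e.\ $c$ is orthogonal to the column space of $\tilde{\L}$. Since $\tilde{\L}$ is symmetric, its column space equals $(\ker\tilde{\L})^{\perp}$, and by Assumption~\ref{assump:simple} $\ker\tilde{\L}=\span(\tilde{v}(\L))$; therefore $c\in\span(\tilde{v}(\L))$ and $z=\ones\otimes c\in\span(\ones\otimes\tilde{v}(\L))$. The reverse inclusion is a one-line check: $(\L\otimes I)(\ones\otimes\tilde{v}(\L))=(\L\ones)\otimes\tilde{v}(\L)=\mathbf{0}$, and $P_i\tilde{v}(\L)=\tilde{l}^i\big(\tilde{l}^{i^T}\tilde{v}(\L)\big)/\big(\tilde{l}^{i^T}\tilde{l}^i\big)=\mathbf{0}$ because, by symmetry of $\tilde{\L}$, $\tilde{l}^{i^T}\tilde{v}(\L)$ is the $i$-th entry of $\tilde{\L}\tilde{v}(\L)=\mathbf{0}$. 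The only point that needs attention is the well-definedness of the $P_i$ (hence $\tilde{l}^i\neq\mathbf{0}$); apart from that, the statement is just the fundamental theorem of linear algebra for $\tilde{\L}$ carried through the Kronecker product, so I do not anticipate a genuine obstacle.
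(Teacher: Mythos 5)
Your proof is correct and follows essentially the same route as the paper's: reduce elements of $\ker(\L\otimes I)$ to the form $\ones\otimes c$, observe that $Pz=\mathbf{0}$ forces $\tilde{l}^{i^T}c=0$ for every $i$, and conclude $c\in\ker\tilde{\L}=\span(\tilde{v}(\L))$ from the symmetry of $\tilde{\L}$ and the simplicity of $\tilde{\lambda}$. Your version is just stated directly rather than by contradiction, and the added check that each $\tilde{l}^i\neq\mathbf{0}$ (so the $P_i$ are well defined) is a small point the paper leaves implicit.
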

\begin{proof}
	It is trivial to show that $\ones\otimes \tilde{v}$ is in the kernel of $P$. Now, by contradiction we prove that there is no other intersection between the kernels. Let $\ones\otimes \psi$, $\psi\in \R^n\notin\span(\tilde{v}(\L))$, be another intersection for the kernels of $\L\otimes I$ and $P$. Then
	$$P(\ones\otimes \psi)=0.$$
	From the definition of $P$ we get that   
	\begin{equation*}
		\frac{1}{\tilde{l}^{i^T}\tilde{l}^i}\tilde{l}^i\tilde{l}^{i^T} \psi=0, \ i=1,\ldots,n.
	\end{equation*}
	This gives that 
	$$\tilde{l}^{i^T}\psi=0,\ i=1,\ldots,n.$$ 
	Since $\span(\tilde{v}(\L))$ is the kernel of $\tilde{\L}$, the above equation does not give any solution for $\psi$. As a consequence, the intersection of the kernels of $\L$ and $P$ is  $\span(\ones\otimes\tilde{v}(\L))$. 
\end{proof}
\begin{lemma}\label{lemma:M}
	For a connected graph $\G$, if all Laplacian's eigenvalues are simple, the matrix $ M $ in \eqref{eq:z(t)} is positive semidefinite, and has a simple null eigenvalue. 
\end{lemma}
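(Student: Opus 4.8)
The plan is to exploit the positive semidefiniteness of both summands of $M=k(\L\otimes I+P)$ and then to reduce the computation of $\ker M$ to the intersection of kernels already identified in Lemma~\ref{lemma:eigvec_P}.

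First I would establish that $M$ is positive semidefinite. Since $\G$ is connected (in fact for any undirected graph), $\L$ is symmetric positive semidefinite, and by Lemma~\ref{lemma:eig_L_kron} the eigenvalues of $\L\otimes I$ are those of $\L$ repeated $n$ times, hence all nonnegative, so $\L\otimes I\succeq 0$. Each block $P_i=\tilde{l}^i\tilde{l}^{i^T}/(\tilde{l}^{i^T}\tilde{l}^i)$ is a rank-one orthogonal projector, hence symmetric positive semidefinite, so the block-diagonal matrix $P=\diag(P_1,\ldots,P_n)\succeq 0$. Therefore $\L\otimes I+P\succeq 0$, and scaling by $k\in\R^+$ preserves this, giving $M\succeq 0$; in particular all eigenvalues of $M$ are real and nonnegative, and $0$ is its smallest eigenvalue.

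Next I would characterize $\ker M$. For any $z$, the quadratic form $z^TMz=k\,z^T(\L\otimes I)z+k\,z^TPz$ is a sum of two nonnegative terms, so (using $M\succeq 0$) $Mz=0\iff z^TMz=0\iff z^T(\L\otimes I)z=0\text{ and }z^TPz=0$, and since $\L\otimes I$ and $P$ are each positive semidefinite this is in turn equivalent to $(\L\otimes I)z=0$ and $Pz=0$. Hence $\ker M=\ker(\L\otimes I)\cap\ker P$, which by Lemma~\ref{lemma:eigvec_P} equals $\span(\ones\otimes\tilde{v}(\L))$. Under Assumption~\ref{assump:simple}, $\tilde{\lambda}(\L)$ is a simple eigenvalue of $\L$, so $\tilde{v}(\L)$ is a single nonzero vector and this span is one-dimensional; since $M$ is symmetric, its null eigenvalue therefore has multiplicity one, i.e., it is simple.

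The argument is essentially bookkeeping once Lemma~\ref{lemma:eigvec_P} is in hand; the only points I would spell out carefully are the standard facts that, for a positive semidefinite matrix $A$, $z^TAz=0$ implies $Az=0$, and consequently $\ker(A+B)=\ker A\cap\ker B$ for positive semidefinite $A,B$, as the whole reduction hinges on them. There is no genuine obstacle beyond invoking these correctly and noting where connectivity and Assumption~\ref{assump:simple} enter.
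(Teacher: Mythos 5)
Your proof is correct and takes essentially the same route as the paper's: both decompose $M$ into the two positive semidefinite summands $k(\L\otimes I)$ and $kP$ and reduce the simplicity of the null eigenvalue to the fact that the intersection of their kernels is the one-dimensional $\span(\ones\otimes\tilde{v}(\L))$ from Lemma~\ref{lemma:eigvec_P}. If anything, your kernel-based bookkeeping is a little tighter than the paper's Rayleigh-quotient phrasing, which writes $\min\{R(\L\otimes I,x)+R(P,x)\}=\min\{R(\L\otimes I,x)\}+\min\{R(P,x)\}$ — an equality that holds here only because the two minimizing sets intersect, a point your argument does not need to invoke.
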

\begin{proof}
	For any non-zero normalized vector $x\in \R^{n^2}, \ x^Tx=1$, the Rayleigh quotient \cite{horn2012matrix} of $M$ is defined as
	\begin{equation}
		R(M,x)=x^TM x.
	\end{equation}
Let $\gamma_1\le \ldots\le\gamma_{n^2}$ be eigenvalues of $M$. Since $M$, $\L\otimes I$, and $P$ are symmetric matrices, and hence Hermitian, from the min-max theorem \cite{horn2012matrix} we get
\begin{equation*}\begin{array}{rl}
\gamma_1=\displaystyle \min\{R(M,x): x\ne 0\}&= k\min\{R(\L\otimes I,x)\\&+R(P,x):x\ne 0\}\\
&\displaystyle=k\min\{R(\L\otimes I,x):x\ne 0\}\\&\displaystyle+k\min\{R(P,x):x\ne 0\}\\
&=k\lambda_{1}+k\lambda_{P_{min}}=0.
\end{array} 
\end{equation*}	
From Lemma \ref{lemma:eig_L_kron}, we know that $\ones \otimes v_j$, $j=1,\ldots,n$ form an orthogonal basis for the kernel of $\L\otimes I$. From Lemma \ref{lemma:eigvec_P}, we know that the intersection of the kernels of $\L\otimes I$ and $P$ is $\span(\ones\otimes \tilde{v}(\L))$. This means that 
$$R(M,x)=0$$
occurs only for $x=\ones\otimes \tilde{v}(\L)$. In other words, $M$ is a positive semi-definite with the only one null-eigenvalue associated with the one-dimensional kernel $\span(\ones\otimes \tilde{v}(\L))$. 
\end{proof}

The following theorem provides conditions to estimate the eigenvectors of the Laplacian matrix.
\begin{theorem}\label{theorem:consensus}
For the system given in \eqref{eq:z(t)} with a connected undirected graph and a Laplacian matrix with all simple eigenvalues, we get
\begin{equation}\label{eq:consensus}
	\lim\limits_{t\to\infty}z_i(t)=\gamma \tilde{v}(\L), \ \ i=1,\ldots,n, \ \gamma\in \R.
\end{equation}
\end{theorem}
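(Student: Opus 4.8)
\textit{Proof proposal.} The plan is to read \eqref{eq:z(t)} as a linear time‑invariant system $\dot z=-Mz$ and extract its asymptotic behaviour from the spectral structure of $M$ already established in Lemma~\ref{lemma:M}. First I would recall that, being real symmetric, $M$ admits an orthonormal eigenbasis $\{u_1,\dots,u_{n^2}\}$ of $\R^{n^2}$ with eigenvalues $0=\gamma_1<\gamma_2\le\cdots\le\gamma_{n^2}$. Here the \emph{strict} inequality $\gamma_1<\gamma_2$ and the identification $u_1=(\ones\otimes\tilde v(\L))/\|\ones\otimes\tilde v(\L)\|$ are exactly the content of Lemma~\ref{lemma:M} together with Lemma~\ref{lemma:eigvec_P}; note also that $\tilde v(\L)\neq 0$ (it is an eigenvector of $\L$) so $u_1$ is well defined and $\|\ones\otimes\tilde v(\L)\|^2=n\|\tilde v(\L)\|^2>0$.

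Next I would write the unique solution of \eqref{eq:z(t)} as $z(t)=e^{-Mt}z(0)$ and expand the initial condition in the eigenbasis, $z(0)=\sum_{j=1}^{n^2}c_j u_j$ with $c_j=u_j^{T}z(0)$, so that
\[
z(t)=\sum_{j=1}^{n^2}c_j e^{-\gamma_j t}u_j=c_1u_1+\sum_{j=2}^{n^2}c_j e^{-\gamma_j t}u_j .
\]
Since $\gamma_j>0$ for every $j\ge 2$, each term in the sum tends to $0$ as $t\to\infty$, hence $z(t)\to c_1u_1$, i.e.\ $z(t)$ converges to the orthogonal projection of $z(0)$ onto $\span(\ones\otimes\tilde v(\L))$.

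Finally I would unpack the block structure: $c_1u_1=\gamma\,(\ones\otimes\tilde v(\L))$ with the scalar $\gamma=c_1/\|\ones\otimes\tilde v(\L)\|\in\R$, and since $\ones\otimes\tilde v(\L)=[\tilde v(\L)^{T},\dots,\tilde v(\L)^{T}]^{T}$, reading off the $i$‑th block yields $\lim_{t\to\infty}z_i(t)=\gamma\,\tilde v(\L)$ for every $i=1,\dots,n$, which is \eqref{eq:consensus}.

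I do not expect a genuine obstacle: essentially all the difficulty has already been absorbed into Lemma~\ref{lemma:M}, whose proof pins down both the positive semidefiniteness of $M$ and, crucially, the one‑dimensionality of its kernel. The only points needing a little care are (i) that the positive eigenvalues of $M$ may be repeated, which is harmless because all the associated modes decay, and (ii) that $\gamma$ is determined by $z(0)$ and may vanish (when $z(0)\perp\ones\otimes\tilde v(\L)$); in that degenerate case the estimator converges to the zero estimate, so in practice one initialises $z(0)$ generically to guarantee $\gamma\neq 0$. As an alternative to the matrix‑exponential argument one could invoke a LaSalle/Lyapunov argument with $V(z)=z^{T}z$ and $\dot V=-2z^{T}Mz\le 0$, but the eigendecomposition above is the most direct route given Lemma~\ref{lemma:M}.
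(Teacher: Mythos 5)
Your proof is correct, but it takes a different route from the paper. You solve \eqref{eq:z(t)} explicitly as $z(t)=e^{-Mt}z(0)$ and expand $z(0)$ in an orthonormal eigenbasis of the symmetric matrix $M$, so that all modes with positive eigenvalue decay and only the component along $\ones\otimes\tilde v(\L)$ survives; the paper instead runs a Lyapunov-type argument with the functional $V(t)=z^T(t)Mz(t)$, computes $\dot V=-z^TM^2z$, and concludes from the one-dimensionality of $\ker M$ (Lemma~\ref{lemma:M}) that trajectories converge to $\span(\ones\otimes\tilde v(\L))$. Both proofs put the real work into Lemma~\ref{lemma:M}, but your spectral/matrix-exponential version buys more: it identifies the limit exactly as the orthogonal projection of $z(0)$ onto $\ker M$ (so $\gamma=u_1^Tz(0)/\|\ones\otimes\tilde v(\L)\|$), yields an exponential convergence rate governed by the smallest positive eigenvalue of $M$, and makes rigorous a step the paper leaves somewhat informal --- since the paper's $V$ is only positive semidefinite, the inference from ``$\dot V=0$ only on the equilibrium space'' to convergence really requires an invariance-principle argument (or exactly the eigendecomposition you give), together with the observation that the dynamics restricted to $\ker M$ is stationary. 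Your remark that $\gamma$ may vanish for initial conditions orthogonal to $\ones\otimes\tilde v(\L)$ is a correct caveat that applies equally to the paper's statement, since \eqref{eq:consensus} allows $\gamma=0$; in practice one initialises $z(0)$ generically, as you say.
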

\begin{proof}
     Note that $z(t)=\ones\otimes \tilde{v}(\L)$ is an equilibrium point for the system in \eqref{eq:z(t)}. Consider the following Lyapunov functional
     \begin{equation}
     	V(t)=z^T(t)Mz(t).
     \end{equation} 
     It is easy to show that $z(t)=\ones\otimes \tilde{v}(\L)$ is an equilibrium space. By differentiating with respect to time we get
     \begin{equation}
     \dot{V}(t)=-z^T(t)M^2 z(t).
     \end{equation} 
     From Lemma \ref{lemma:M}, we know that $M$ is a positive semi-definite matrix whose kernel is $\span
     (\ones\otimes \tilde{v}(\L))$. As a consequence, $M^2$ is also a positive semi-definite matrix with only one null-eigenvalue, and $\dot{V}$ gets a value equal to zero only on the equilibrium space. This implies that the system converges along the vector $\ones\otimes \tilde{v}(\L)$. Or
     \begin{equation*}
     	\lim\limits_{t\to\infty} z(t)=\gamma(\ones\otimes \tilde{v}(\L)), \ \gamma\in\R
     \end{equation*}
     which proves \eqref{eq:consensus}.
\end{proof}     
     Theorem \ref{theorem:consensus} says that, if the network graph does not change during a certain time interval, then each agent's estimate, $z_i(t)$, converges to a vector parallel to the eigenvector of the Laplacian matrix. This is a key result of this paper. 

  \begin{remark}
  	The presented algorithms for eigenvalue and eigenvector estimation are independent. However, the latter algorithm requires an estimate of the eigenvalues. Hence, the total estimation time includes the time for eigenvalue estimation plus the time for eigenvector estimation. It is assumed that during this time the network graph remains constant.
  \end{remark}
\subsection{Decentralized gradient construction}
In this section we derive the analytical form of a completely decentralized gradient controller to increase the value of any non-null eigenvalue of the Laplacian matrix. 

For an undirected graph $\G$ with the Laplacian matrix $\L$, consider the eigenvalue problem 
$$\L v(\L)=\lambda(\L) v(\L),$$
with $v(\L)\in\R^n$ being a normalized vector. By multiplying both sides by $v^T(\L)$ we obtain
$$v^T(\L)\L v(\L)=\lambda(\L) v^T(\L)v(\L)=\lambda(\L).$$
Derivation with respect to the node $i$'s position gives
\begin{equation}\label{eq:dlambda/dp}
\begin{array}{rl}
	\dfrac{d\lambda(\L)}{dp_i}=& \dfrac{d(v^T(\L)\L v(\L))}{dp_i}=\dfrac{dv^T(\L)}{dp_i}\L v(\L) \vspace{1em}\\ 
&+v^T(\L)\dfrac{d\L}{dp_i} v(\L)+v^T (\L)\L \dfrac{dv(\L)}{dp_i}.
\end{array} 
\end{equation}
Since $\L$ is symmetric, we know that 
$$v^T(\L)\L \dfrac{dv(\L)}{dp_i}=\dfrac{dv^T(\L)}{dp_i}\L v(\L)=\dfrac{1}{2}\lambda \dfrac{d(v^T(\L)v(\L))}{dp_i}=0.$$     
Then from \eqref{eq:dlambda/dp} we get
\begin{equation}
	\dfrac{d\lambda(\L)}{dp_i}=v^T(\L)\dfrac{d\L}{dp_i} v(\L).
\end{equation}
Consider not a group of single integrator robots, $\dot{p}_i=u_i$, where $p_i$ is the position of the $i$-th robot. Then, we introduce the following gradient-based controller 
\begin{equation}\label{eq:eigenvalue_increase}
u_i=\frac{d\lambda(\L)}{dp_i}=v^T(\L)\frac{d\L}{dp_i} v(\L).
\end{equation}
If the conditions in \eqref{eq:lambda_3_epsilon_bound} does not hold, we use the above gradient control to increase the value of $\lambda_3(\L(\epsilon))$.
\begin{remark}
	The elements of the Laplacian matrix that depend on $p_i$ are the ones in the $i$-th row, and due to symmetry, the $i$-th column. Consequently, the elements of $\dfrac{d\L}{dp_i}$ are all zeros except for the $i$-th row and $i$-th column. This implies that $u_i$ can be computed in a decentralized way.
\end{remark}
Now we are ready to render our decentralized biconnectivity enforcing algorithm. The flowchart in Fig. \ref{fig:flowchart} shows how the biconnectivity algorithm works.

\tikzstyle{decision} = [diamond, draw, fill=blue!20, 
text width=6em, text badly centered, node distance=3cm, inner sep=0pt]
\tikzstyle{block} = [rectangle, draw, fill=blue!20, 
text width=7em, text centered, rounded corners, minimum height=4em]
\tikzstyle{line} = [draw, -latex']
\tikzstyle{cloud} = [draw, ellipse,fill=red!20,node distance=2cm,minimum height=3em]
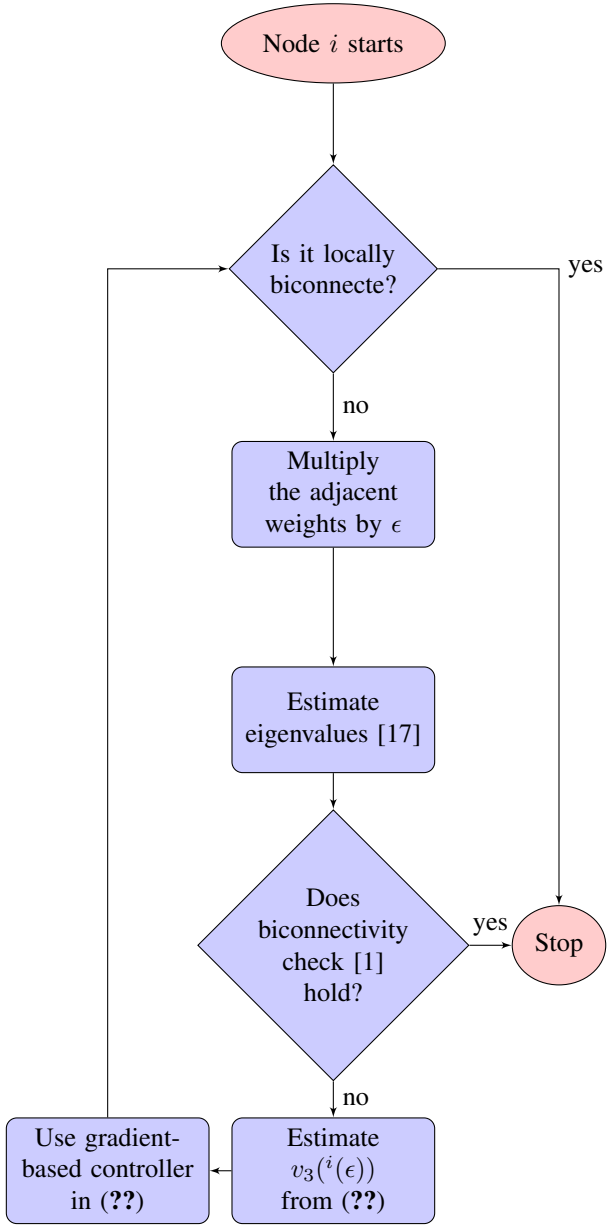
\begin{figure}

\centering
\begin{tikzpicture}[node distance = 3cm, auto]

\node [cloud] (start) {Node $i$ starts};
\node [decision, below of=start] (If_locally) {Is it locally biconnecte?};
\node [block, below of=If_locally] (Perturb) {Multiply the adjacent weights by $\epsilon$};
\node [block, below of=Perturb] (Eig_est) {Estimate eigenvalues \cite{franceschelli2013decentralized}};
\node [decision, below of=Eig_est] (If_bicon) {Does biconnectivity check \cite{Zareh2016biconnectivitycheck} hold?};
\node [block, below of=If_bicon] (Eigvec) {Estimate $v_3(\L^i(\epsilon))$ from \eqref{eq:z(t)}};
\node [block, left of=Eigvec, node distance=3cm] (Gradient) {Use gradient-based controller in \eqref{eq:eigenvalue_increase}};
\node [cloud, right of=If_bicon, node distance=3cm] (Stop) {Stop};

\path [line] (start) -- (If_locally);
\path [line] (If_locally) -- node {no}(Perturb);
\path [line] (If_locally)-| node {yes}(Stop);
\path [line] (Perturb) -- (Eig_est);
\path [line] (Eig_est) -- (If_bicon);
\path [line] (If_bicon) --  node {yes}(Stop);
\path [line] (If_bicon) -- node {no}(Eigvec);
\path [line] (Eigvec) -- (Gradient);
\path [line] (Gradient) |- (If_locally);

\end{tikzpicture}
\caption{Biconnectivity algorithm.} \label{fig:flowchart}
\end{figure}

%
%

\begin{remark}
	Note that the algorithm is separately done by any single robot, and all the included sub-algorithms are based on the local data exchange. Therefore, the whole procedure is totally decentralized.  
\end{remark}
\section{Simulation results}\label{section:simulations}
In this section we aim at showing the effectiveness of the proposed algorithms. We suppose that the communication is defined by the $R$-disk model, in which the elements of the adjacency matrix are defined as 
 \begin{equation}\label{eq:a_ij}
 	a_{ij}=\left\{\begin{array}{lc}
 e^{-(\|p_i-p_j\|^2)/(2\sigma)}& \|p_i-p_j\|\le R  \\ 
 0&  \|p_i-p_j\|> R,
 \end{array} \right.
 \end{equation}
 The selected communication parameters are
 $$R=0.5\ \ \sigma=0.125.$$
 
In the following example, the performance of the eigenvector estimation algorithm is demonstrated.    
 \begin{example}
For the random graph in Fig. \ref{fig:graph1}, the adjacency and Laplacian matrices can be computed from \eqref{eq:a_ij}. 
%

\begin{figure}
	\centering
		\includegraphics[width=1\linewidth,height=0.2\textheight]{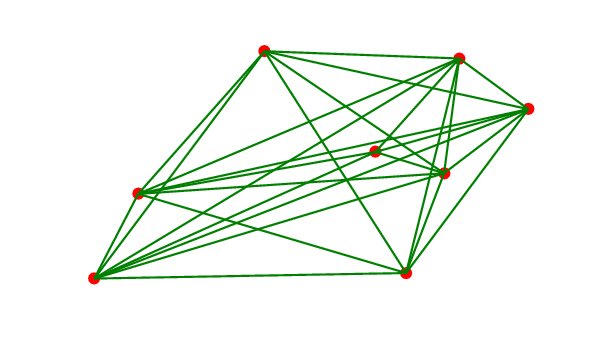}
		\caption{Network graph of a randomly positioned multi-robot system}
		\label{fig:graph1}
\end{figure}
The eigenvalues of the Laplacian matrix are
$$ \{0,\ 0.303,\ 0.412,\ 1.125,\ 1.288,\ 1.721
,\ 2.327,\
2.970\}.$$

Normalized eigenvectors associated with the second and the third-smallest eigenvalues ($\lambda_2=0.303$ and $\lambda_3=0.412$) are
$$\begin{array}{ccc}
v_2=\left[\begin{array}{r}
0,494\\
-0,446\\
-0,311\\
-0,277\\
0,490\\
-0,258\\
0,277\\
0,031 
\end{array}\right]& &v_3=\left[\begin{array}{r}
  0.208\\
  0.178\\
  0.103\\
  0.130\\
  0.069\\
  0.049\\
  0.189\\
  -0.925
\end{array}\right]
\end{array} . $$
The simulation results for $k=50$ and $500$ are shown in Figs.~\ref{fig:v_2} and \ref{fig:v_3}. The corresponding elements of the estimation vectors for different robots are shown with the same colors. We can see that, by use of the proposed algorithm, the state of each robot very rapidly converges to the desired eigenvector. By increasing $k$ the convergence rate increases. We can see that the Assumption~\ref{assump:simple} is true.  

\begin{figure}%
	\centering
	\begin{subfigure}{1\columnwidth}
		\includegraphics[width=\columnwidth,height=0.2\textheight]{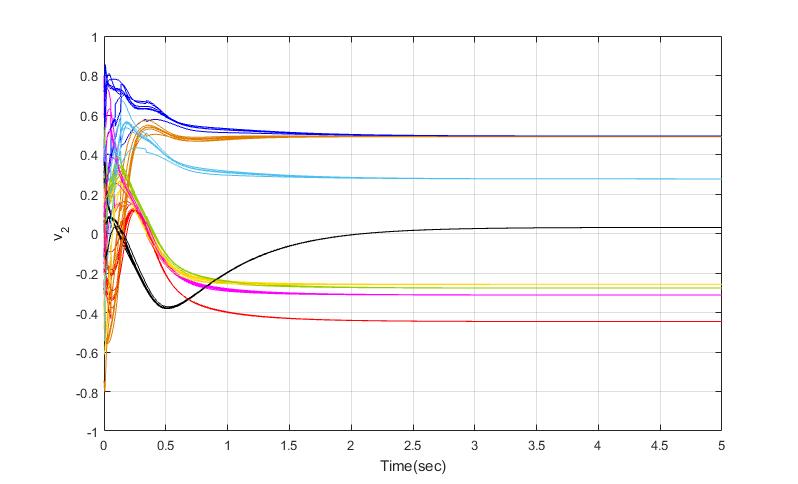}%
		\caption{k=50}%
		\label{subfig:a2}%
	\end{subfigure}\hfill%
	\begin{subfigure}{1\columnwidth}
		\includegraphics[width=1\columnwidth,height=0.2\textheight]{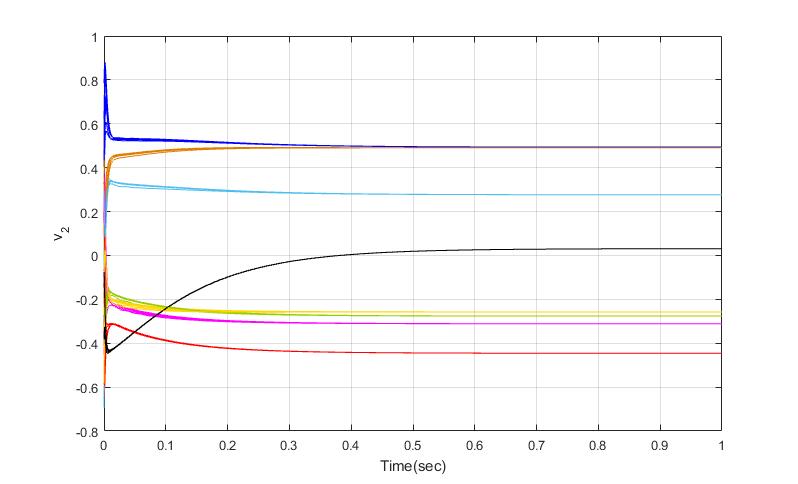}%
		\caption{k=500}%
		\label{subfig:b2}%
	\end{subfigure}
	\caption{Decentralized estimation of $v_2$}
	\label{fig:v_2}
\end{figure}

\begin{figure}%
	\centering
	\begin{subfigure}{1\columnwidth}
		\includegraphics[width=\columnwidth,height=0.2\textheight]{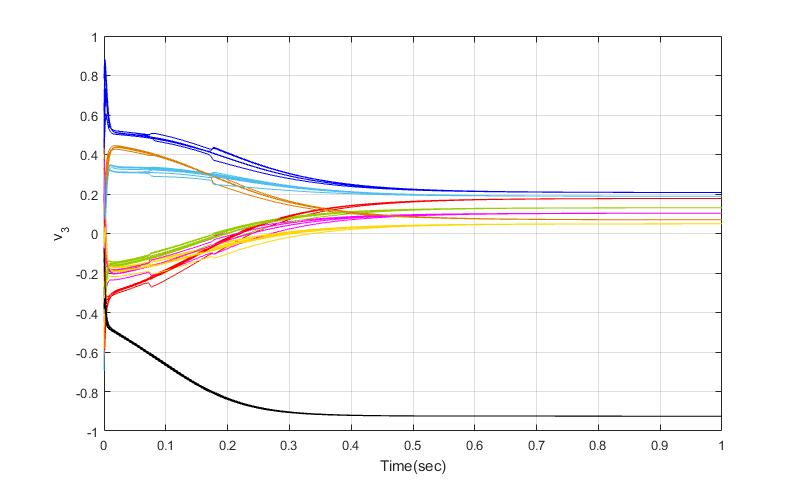}%
		\caption{k=50}%
		\label{subfig:a3}%
	\end{subfigure}\hfill%
	\begin{subfigure}{1\columnwidth}
		\includegraphics[width=1\columnwidth,height=0.2\textheight]{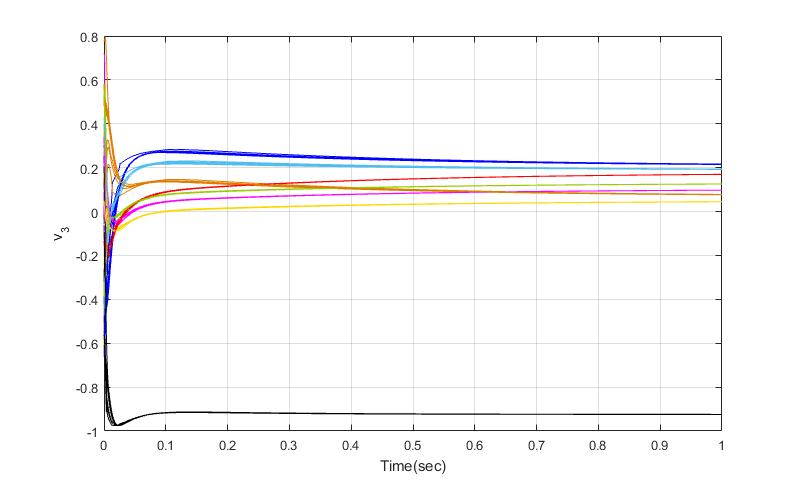}%
		\caption{k=500}%
		\label{subfig:b3}%
	\end{subfigure}
	\caption{Decentralized estimation of $v_3$}
	\label{fig:v_3}
\end{figure}
 \end{example}
The next example demonstrates the results of a consensus problem in a multi-robot system, once with and another time without the biconnectivity algorithm. 
\begin{example}\label{example:2}
	Consider the graph in Fig. \ref{fig:graph2} with $n=8$ nodes. At time zero, the robots start running a simple consensus protocol
	$$\dot{p}_i=u_i^{c},$$
	where $p_i$ indicates the position of the robot $i$, and $u_i^c$ is the local controller
	$$u_i^c=\sum\limits_{j\in\N_i}a_{ij}(p_j-p_i).$$  
	\begin{figure}
		\centering
		\includegraphics[width=1\linewidth,height=0.2\textheight]{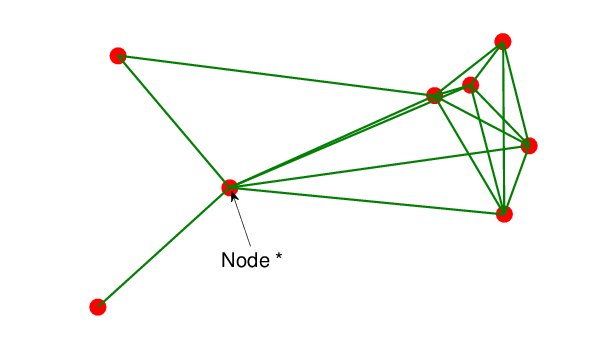}
		\caption{Initial network graph of the multi-robot system in Example \ref{example:2}.}
		\label{fig:graph2}
	\end{figure}
	Fig. \ref{fig:graph_simpleconsensus_withouth_BM} shows that the system gets disconnected after $t=1~sec$. 
	\begin{figure}
\centering
\includegraphics[width=1\linewidth,height=0.2\textheight]{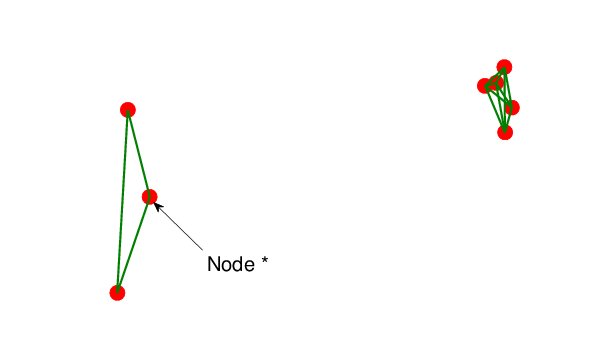}
\caption{Network graph after $t=1~sec$ without biconnectivity algorithm. }
\label{fig:graph_simpleconsensus_withouth_BM}
\end{figure}
Next, the biconnectivity algorithm is utilized. We first check biconnectedness, following the procedure introduced in \cite{Zareh2016biconnectivitycheck}. Note that the node $*$ in Fig. \ref{fig:graph2}, is the only not locally biconnected one, and
hence, it must meet the sufficient conditions introduced in \eqref{eq:lambda_3_epsilon_bound}. Based on the criterion \eqref{eq:lambda_3_epsilon_bound}, and multiplying the weight of the node $*$ by $\epsilon=0.05$, we get
$$\lambda_3(\L^*(\epsilon))=0.013<\epsilon\sqrt{(8)}\sum_{k=1}^{n}a_{*k}=0.0022,$$
which implies that the biconnectivity check fails.
In order to increase the value of $\lambda_3(\L^*(\epsilon))$, we use the gradient-based controller in \eqref{eq:eigenvalue_increase}, by estimating $v_3(\L^*(\epsilon))$ implementing \eqref{eq:z(t)}. From \eqref{eq:a_ij} and \eqref{eq:eigenvalue_increase}, we obtain the following biconnectivity protocol for node $*$
$$\begin{array}{rl}
u_*^b&=v_3^T(\L^*(\epsilon))\dfrac{d\L^*(\epsilon)}{dp^*}v_3(\L^*(\epsilon))  \\ 
=& -\sum_{j=1}^{n}a_{*j}(v_{3_j}(\L^*(\epsilon))-(v_{3_j}(\L^*(\epsilon)))^2\dfrac{p_*-p_j}{\sigma^2},
\end{array}$$
in which $v_{3_j}(\L^*(\epsilon))$ indicates the $j$-th element of the estimation vector $v_3(\L^*(\epsilon))$. As shown in Fig.~\ref{fig:graph_simpleconsensus_with_BM}, the graph reaches  biconnectivity after 1 second.  
\begin{figure}[t!]
	 \begin{subfigure}[t]{0.5\textwidth}
	 	\centering
	 	\includegraphics[width=1\linewidth,height=0.2\textheight]{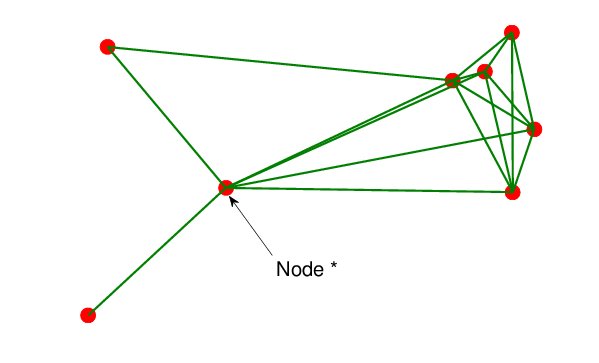}
	 	\caption{$t=0.4$ sec.}
    \end{subfigure}
    \begin{subfigure}[t]{0.5\textwidth}
     	\centering
     	\includegraphics[width=1\linewidth,height=0.2\textheight]{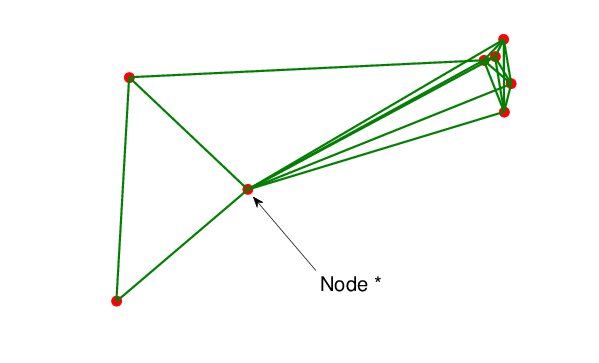}
     	\caption{$t=1$ sec.}
    \end{subfigure}
    \caption{Network graph achieved from biconnectivity algorithm.}
    \label{fig:graph_simpleconsensus_with_BM}
\end{figure}

\end{example}

\section{Conclusions}\label{section:conclusions}
In this paper, we developed a decentralized algorithm to achieve graph biconnectivity in multi-robot systems. We presented an estimation protocol to be executed by every single robot to estimate the eigenvectors of the Laplacian matrix. Simulations showed that, by  increasing the estimation gain, we can expedite the convergence rate.  Using the estimate of the eigenvector, a gradient control was proposed to increase the third-smallest eigenvalue of the Laplacian matrix, to reach the requirements of the biconnectedness, introduced by the authors in \cite{Zareh2016biconnectivitycheck}. In our future work, we aim at finding the convergence rate of the proposed algorithm.
\bibliographystyle{IEEEtran}
 \bibliography{biblio_connectivity,biblio,biblio_applications} 
\end{document}